\newtheorem{theorem}{Theorem}
\newtheorem{lemma}[theorem]{Lemma}
\begin{document}

\title{Space Filling Curves for Coverage Path Planning with Online Obstacle Avoidance}

\author{Ashay Wakode$^{1}$, Arpita Sinha$^{2}$
\thanks{$^{1}$ Ashay Wakode is with Autonomous Robotics Research Center, Technology Innovation Institute, Abu Dhabi, United Arab Emirates
{\tt\small  {ashaywakode}@gmail.com}}

\thanks{$^{2}$ Arpita Sinha is with System and Controls Department; Indian Institute Of Technology Bombay, Mumbai, India  \\
{\tt\small  {arpita.sinha}@iitb.ac.in  }
}

}


\maketitle
\begin{abstract}
The paper presents a strategy for robotic exploration problem using Space-Filling curves (SFC). The strategy plans a path that avoids unknown obstacles while ensuring complete coverage of the free space in region of interest. The region of interest is first tessellated, and the tiles/cells are connected using a SFC pattern. A robot follows the SFC to explore the entire area. However, obstacles can block the systematic movement of the robot. We overcome this problem by determining an alternate path online that avoids the blocked cells while ensuring all the accessible cells are visited at least once. The proposed strategy chooses next waypoint based on the graph connectivity of the cells and the obstacle encountered so far. It is online, exhaustive and works in situations demanding non-uniform coverage. The completeness of the strategy is proved and its desirable properties are discussed with examples.  

\end{abstract}
\begin{IEEEkeywords}
Robotic Exploration, Space-Filling curve, Online Obstacle evasion, Non-uniform coverage.
\end{IEEEkeywords}
\section{Introduction}
\IEEEPARstart{I}{n} 1878, George Cantor demonstrated that an interval $I = [0,1]$ can be mapped bijectively onto $[0,1] \times[0,1]$. Later, G. Peano discovered one such mapping that is also continuous and surjective; the image of such mapping when parameterized in the interval $I$ to higher dimensions ($\mathbb{R}^{n}$) is known as Space-Filling Curve (SFC). More SFCs were later discovered by E. Moore, H. Lebesgue, W. Sierpinski, and G. Polya \cite{sagan,bader}. Space-filling curves (SFCs) possess intriguing properties. They are self-similar, meaning each curve consists of sub-curves similar to the entire curve. SFCs are also surjective maps, ensuring they cover every point in $\mathbb{R}^{n}$. Furthermore, they preserve locality, so points close in $I$ remain close in $\mathbb{R}^{n}$.
\par
Due to the above properties, SFCs have been used in many applications - data collection from sensor network \cite{sensor1, sensor2}; ordering meshes of complex geometries \cite{bader} and many more. An approximate solution to Travelling Salesman Problem (TSP) can be found using Hilbert's Space Filling curve \cite{tsp}. Space Filling Tree analogous to SFCs having tree-like structure have been proposed for sampling-based path planning \cite{SFCtree}, as opposed to traditional methods like Rapid-exploring Random Trees (RRTs) \cite{RRT}.
\par
In robotic exploration problem, a single or group of robotic agents are deployed to search, survey or gather information about a specific region while avoiding obstacles. Robotic exploration is one of the sub-problems of the larger Coverage Planning Problem (CPP), wherein the agent is bestowed with the task of visiting all points in a 2D area or volume \cite{galceran, choset, uavsurvey, VCPP}. Numerous approaches for CPP already exists - Graph based, Grid based, Neural-Network based, Cellular decomposition based \cite{galceran}. Each of these approaches can be used for robotic exploration problem. 
\par
SFCs have been utilized for robotic exploration, with their suitability stemming from the advantageous properties they possess. Exploration using SFCs is time complete and robust to failure \cite{spires}. Hilbert's curve have been shown to be more efficient in time of travel / cost of task than lawnmower's path \cite{sadat2}. The exploration strategies developed for 2D using SFCs can be extended to 3D since similar grammar exists for their construction in both dimensions \cite{bader}. The generalized version of SFCs aka Generalized SFC (GSFC) can span irregular quadrilateral and triangles \cite{bader}. SFCs can be easily used in non-uniform coverage scenarios requiring some parts to be searched more rigorously than others.
 \par
There is sizable literature dealing with SFC based robotic exploration.  \cite{spires} suggests the use of a swarm of mobile robots for exploration, each mobile robot following a SFC. This approach is efficient in terms of energy, robust to failures and assures coverage in finite time, but considers an obstacle-free environment. \cite{sadat1} proposes a novel method for non-uniform coverage planning for UAVs. \cite{sadat2} takes the work further and uses the Hilbert curve for non-uniform coverage which is optimal as opposed to existing methods not using SFC. \cite{hawk} introduced a UAV system to conduct aerial localization that uses Moore's SFC. However, \cite{sadat1, sadat2, hawk} does not consider obstacles. 
\par
\cite{tiwari} introduced path planning approach for SFCs with obstacles and proved the optimality for specific obstacle configurations, due to the existence of a Hamiltonian path for such obstacle configurations. \cite{ban} introduced an algorithm to construct SFCs for sensor networks with holes. The algorithm can be used for motion planning with obstacles while using SFC. However, the solutions proposed in \cite{tiwari, ban} require the knowledge of obstacles before starting the exploration.
\par 
\cite{nair} formulated an online obstacle evasion strategy for Hilbert curve with only one waypoint blocked by an obstacle. \cite{joshi} builds upon \cite{tiwari} and suggests a strategy capable of evading two neighboring waypoints on a Hilbert curve. \cite{wakode} talks about online obstacle avoidance for aerial vehicle covering a region using the Sierpinski curve, but the obstacles need to block disjoint cells.
\par
Early work on the topic did not consider obstacles in the environment. Later works considered obstacles but required knowledge about the obstacles or were restricted to a particular class of obstacles and SFC used. Hence, the necessary next step is to develop an online strategy that can navigate around arbitrary obstacles while employing any SFC and this is addressed by the proposed strategy, The contributions of the paper are:
\begin{itemize}
    \item The proposed strategy is online and can avoid obstacles on he go using the grid connectivity of the SFC cells. 
    \item The strategy is applicable to any SFC and any number of obstacles placed randomly in the environment.
    \item The strategy can be used for non-uniform coverage.
\end{itemize}
\par
The rest of this paper is organized into five sections: Section II formally introduces SFCs. Section III formulates the problem and the proposed strategy is presented. Section IV discusses the properties of the strategy and the implemented examples. Section V concludes the paper and talks about the limitations and possible future work. 

\section{Preliminaries}
Space Filling curves (SFCs) are defined as follows, \\
\textbf{Definition} \cite{bader}: Given a mapping $f : I \rightarrow Q \subset \mathbb{R}^{n}$, with $f_{*}\textit(I)$ as image, $f_{*}\textit(I)$ is called a SFC, if $f_{*}\textit(I)$ has Jordan content (area, volume, ..) larger than 0. 
\par 
In this paper, approximations of SFCs are used. Approximate SFCs are constructed by dividing $\mathbb{R}^{n}$ and connecting the centers of the cells by a continuous piecewise straight line. The way the divisions are connected depends on the grammar of the SFC being used.
\par
\begin{figure}[t]
    \centering
    \includegraphics[width=3in]{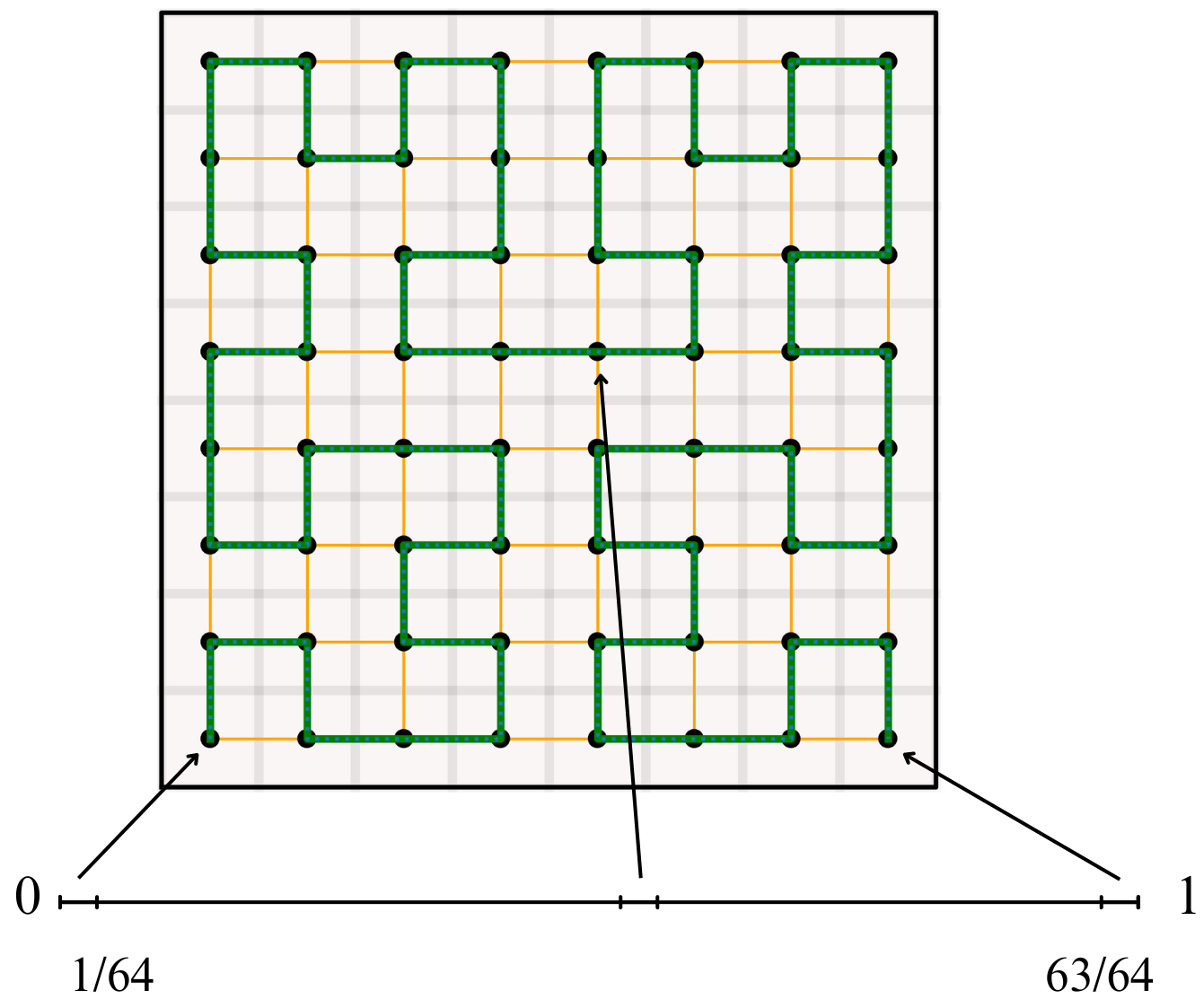}
    \caption{Hilbert curve as mapped from $I$ to a tessellated square}
    \label{fig:mapping}
\end{figure}

\par
Figure \ref{fig:mapping} shows a square divided N times and the centers are connected to generate Hilbert curve. In this case, hilbert curve is mapped onto a square in $\mathbb R^{2}$. The degree to which the square is divided is quantified by the term ``iteration" (also referred to as order). As iteration goes to infinity, the SFC visits every point in the square. This explains the \textbf{surjective mapping} property of SFCs. A rigorous mathematical explanation can be found in \cite{sagan,bader}. The approximate SFCs are simply referred to as SFC in literature. Moreover, approximate SFCs are sufficient for the exploration given that the robotic agent has a search radius enough to cover the entire cell (area or volume) while at its center. Furthermore, we can see in Fig \ref{fig:iterations} that iteration $N$ is constructed by rotation and translation of iteration $N-1$, which demonstrates the \textbf{self-similar} property. The grammatical way of constructing SFCs is based on this fact.
\par SFCs are H\"{o}lder continuous mappings. Given, 
\begin{equation*}
    x, y \in I
\end{equation*}
and $f$ is SFC with $f : I \rightarrow Q \subset \mathbb{R}^{n}$, if,
\begin{equation*}
     \lVert f(x)- f(y) \rVert _{2} \leq C |x-y|^{r} 
\end{equation*}
\begin{figure}[t]
    \centering
    \includegraphics[width=2.8in]{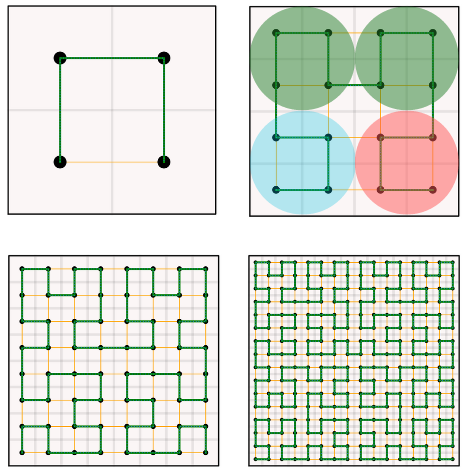}
    \caption{Hilbert curve iterations 1 to 4; Colored circles represent specific translation + rotation rules for creating $n^{th}$ iteration from $n-1^{th}$ iteration}
    \label{fig:iterations}
\end{figure}
The function $f$ is said to be H\"{o}lder continuous with exponent $r$, if a constant $C$ exists for all $x,y$. The RHS can be interpreted as the distance between $x$ and $y$, while LHS is the distance between points in SFC. Here, the distance between points on SFC is bounded by the distance between the points they are mapped from, explaining the \textbf{Locality Preservation}.
Lastly, any problem in Q $\subset \mathbb{R}^{n}$ mapped by SFC can be transformed into a problem in $I$. And often solving the problem in $I$ is straightforward than solving it in the original space $Q$. This technique is referred to as \textbf{General Space-Filling Heuristics} (\textbf{GSFH}).
\par
\section{Problem Formulation \& Proposed Solution}
We consider a robotic agent with sensing radius $s$ that needs to explore a 2D region. This region is partitioned into cells so that each cell remains within the agent's sensor footprint when positioned at the cell's center. The robot traverses the cells sequentially following the SFC pattern. It identifies obstacles while moving into the cell containing them and stores the obstacle information for future reference.
\par
We elaborate our strategy using the Hilbert curve, but the strategy can be used for any SFC. Consider a square with area $A$, having static obstacles at unknown locations. The number of obstacles are not known and they can have arbitrary shapes and sizes. The iteration ($k \in N$) of Hilbert curve is determined such that the agent can scan an entire cell while at its center. Mathematically the iteration $k$ can be found out by comparing sensing radius $s$ and diagonal of cell for Hilbert curve, 
\begin{equation*}
 \label{eq1}
    k \geq \lceil log_{2}(A / s \sqrt{2} - 1) \rceil
\end{equation*}
\par
The center of each cell is identified as waypoint and are numbered from $0$ to $N-1$ starting from one of the corner cells. Without loss of genrality, we assume it to be the left-bottom cell. The goal of the robotic agent is to start at $0$ and visit all the unblocked reachable waypoints.  
\par 
Here, GSFH simplifies a 2D scanning problem into a 1D routing task by employing SFC. When addressing the 1D scenario which involves covering all waypoints along a line, we adopt a greedy approach by sequentially moving from left to right, ensuring all points to the left are visited before proceeding right. This method avoids the need for backtracking to reach any previously skipped waypoint on the left, thereby ensuring an efficient path is followed. In a 1D routing task with obstacle (or obstacles), the agent will be limited to traversing only the segment of the line from its starting point. 
\par
In our scenario, traversing the 1D route left to right or visiting waypoints $0$ to $N-1$ sequentially the agent effectively navigates the 2D region following the SFC pattern. The existence of an obstacle does not pose a significant problem, as the agent can deviate from the SFC pattern and identify an alternate route through other connected waypoints.
\par 
\begin{figure}[t]
    \centering
    \includegraphics[width=3in]{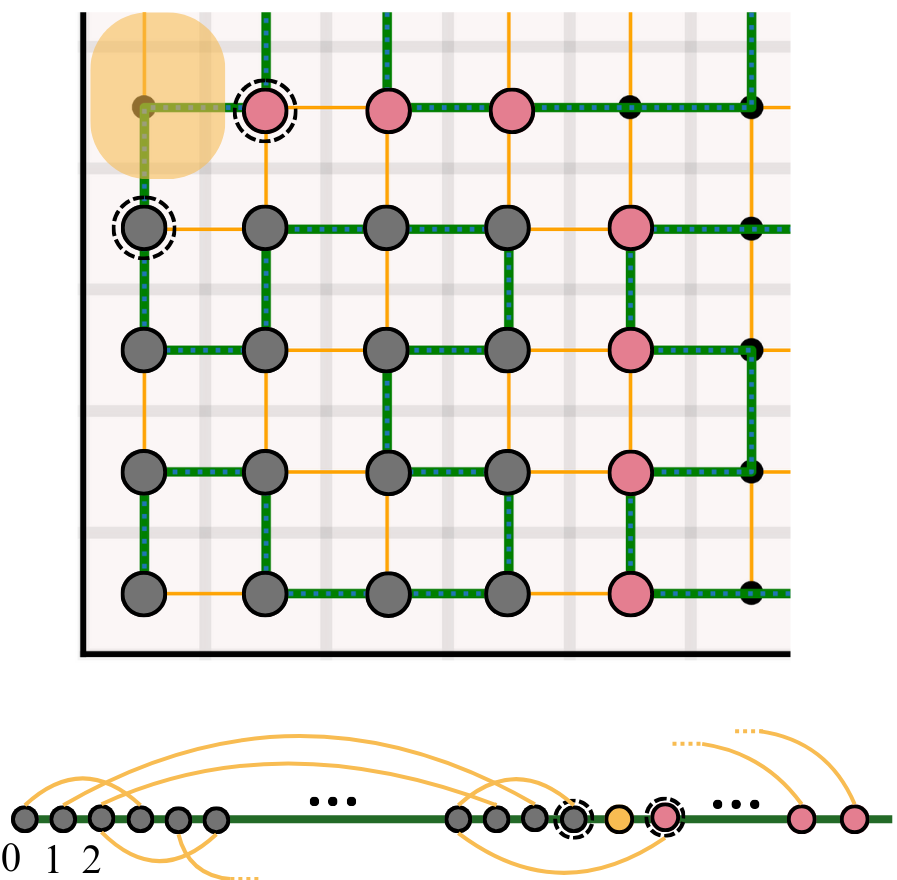}
    \caption{Grey dots: Visited vertices of the graph, Yellow lines: Edges connecting the vertices, Red dots: Vertices adjacent to already visited vertices}
    \label{fig:explain}
\end{figure}
The proposed strategy considers the waypoints (centers of the cells) as vertices of graph, with adjacent waypoints connected. It determines the next waypoint/vertex to visit based on vertices already visited and obstacles identified (see figure \ref{fig:explain}). The agent encounters an obstacle while at dotted grey vertex and chooses to visit dotted red vertex among other adjacent vertices. The strategy selects the lowest-numbered vertex that has not yet been visited and is adjacent to the set of vertices already visited. A shortest path is determined to the selected vertex. The strategy is used repeatedly until all the vertices connected to initial one are visited atleast once. The agent rejects to choose the vertices where obstacle has been detected in the past.
\par 
We use the following graph theoretic nomenclature for our problem:
\begin{itemize}
    \item $G = $ Graph with waypoints of SFC as vertices; Adjacent vertices are connected by the edge; No waypoint is assumed to be blocked by an obstacle in $G$. $G$ is a dual graph of SFC decomposition.
    \item $O = $ Set of vertices where obstacles have been detected  
    \item $V = $ Set of visited vertices 
    \item $A(V, G) = $ Set of vertices in $G$ adjacent to vertices in $V$ but not present in $V$
\end{itemize}

\par
The strategy is presented as a pseudocode in Algorithm \ref{alg:main}. While at waypoint $c$, the agent knows $G$ given the SFC, $V$ from the visited nodes and $O$ from the detected obstacles till that time. If all the waypoints adjacent to $V$ in $G$ are blocked, no waypoint remains to be visited and the search can be terminated (line $4$). On the other hand, the strategy suggests minimum numbered waypoint $p$, and the shortest path $R$ to $p$ is found using Dijkstra's algorithm (any shortest path finding algorithm can be used). $R$ refers to an array of waypoints starting with $c$ and terminating at $p$. The agent checks if $p$ is blocked while at the pen-ultimate waypoint (line $7$). If $p$ is found to be blocked, it is added to $O$ and the strategy starts again at step $4$ (line $8$ to $10$). Otherwise, $p$ is reached and added to $V$ (line $12$ and $13$). The algorithm terminates when no waypoint adjacent to the visited waypoints remain.
\begin{algorithm}
\caption{Online Strategy for Obstacle evasion}\label{alg:main}
\begin{algorithmic}[1]
\State Input : $G$, $V$, $O$
\State Output : Next waypoint
\State Initialization : Current waypoint ($c$)
\If{$A(V, G) - O \neq \emptyset$}
    \State $p = min(A(V, G) - O)$ \Comment{min numbered element}
    \State $R =$ shortest route to $p$ starting from $c$
    \While{Agent at $R[-2]$} \Comment{second last element of $R$}
        \If{$p$ is blocked $\lor$  $p \in O$} 
        \State $O \gets O \cup \{p\}$  
        \State Go back to step $4$
        \Else 
        \State Next waypoint $= p$ 
        \State  $V \gets V \cup \{p\}$
        \EndIf
        \EndWhile
\Else 
\State All the reachable waypoints visited
        \EndIf
\end{algorithmic}
\end{algorithm}

\begin{lemma}
An agent starting at waypoint $H$ and following the proposed strategy will visit all the waypoints connected to $H$.
\end{lemma}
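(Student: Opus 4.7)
The plan is to argue by contradiction, combined with a termination argument. First I would establish that Algorithm~\ref{alg:main} halts on any finite graph $G$. Every iteration either adds a new vertex to $V$ (when $p$ is successfully reached, line 13) or adds a new vertex to $O$ (when $p$ is detected as blocked, line 9). Both $V$ and $O$ are monotone increasing subsets of the finite vertex set of $G$, and by construction they remain disjoint, so the total $|V| + |O|$ strictly increases each iteration and is bounded above by $|V(G)|$. Hence the procedure terminates in finitely many steps.

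Next I would characterize the termination state. The only exit condition is the failure of the test in line 4, so at termination $A(V,G) - O = \emptyset$, i.e.\ $A(V,G) \subseteq O$. I would then note two key invariants, both immediate from the pseudocode: (i) every element of $O$ is a genuine obstacle cell, since $O$ is only enlarged in line 9 after the agent physically confirms a block; and (ii) every element of $V$ is a reached, non-obstacle cell.

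The main step is to show that at termination $V$ contains every non-obstacle waypoint reachable from $H$ in the non-obstacle subgraph. Suppose for contradiction there exists such a waypoint $w \notin V$. Pick any path $H = v_0, v_1, \ldots, v_k = w$ in $G$ using only non-obstacle vertices. Since $v_0 = H \in V$ and $v_k = w \notin V$, there is an index $i$ with $v_i \in V$ and $v_{i+1} \notin V$. By definition of $A$, this gives $v_{i+1} \in A(V,G)$, and so by the termination condition $v_{i+1} \in O$. But invariant (i) says every vertex of $O$ is an obstacle, contradicting the choice of the path through non-obstacles. Hence no such $w$ exists, establishing the lemma.

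The subtle point I expect to be the main obstacle is the tacit assumption that the path $R$ returned by Dijkstra in line 6 is actually traversable by the agent: a shortest path in $G$ could in principle pass through unknown obstacle cells, and line 7 only inspects the penultimate vertex. To make the argument watertight I would either restrict Dijkstra to the induced subgraph on $V \cup \{p\}$ (so every intermediate vertex of $R$ has already been visited and is known to be safe, with only the terminal step $p$ being speculative), or augment the while-loop to re-invoke the detection/restart logic on any intermediate vertex that turns out to be blocked. Either fix keeps the two invariants above intact, and then the contradiction argument proceeds exactly as sketched.
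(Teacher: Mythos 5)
Your proof is correct and rests on the same core idea as the paper's: argue by contradiction from the termination condition $A(V,G)-O=\emptyset$. However, your version is substantially more complete, and in fact repairs a gap in the paper's own argument. The paper asserts that if the unvisited, unblocked waypoint $J$ is connected to $H$ then $A(V,G)-O=\{J\}$; this is not literally true, since $J$ need not be adjacent to any visited vertex at termination. Your explicit frontier argument --- take an obstacle-free path $H=v_0,\dots,v_k=w$, find the first index with $v_i\in V$ and $v_{i+1}\notin V$, and conclude $v_{i+1}\in A(V,G)\subseteq O$, contradicting invariant (i) --- is the correct way to close that step. You also supply two things the paper omits entirely: a termination argument (via the monotone growth of the disjoint sets $V$ and $O$ inside a finite vertex set), and the observation that the Dijkstra route $R$ must be restricted to previously visited vertices (or the detection logic extended to intermediate vertices) for the invariants to hold; the paper only tacitly assumes this in its discussion section (``path adjustments are made within the set of waypoints that have already been visited''). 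In short, same strategy, but your write-up is the rigorous version of it.
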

\begin{proof}
We prove the lemma by contradiction. Assume a waypoint $J$ is not blocked by an obstacle and is connected to $H$, but the agent terminated the exploration without visiting $J$. This can happen only if,
\begin{equation*}
    A(V, G) - O = \emptyset
\end{equation*}
But, 
\begin{equation*}
    A(V, G) - O = \{J\}
\end{equation*}
Since, $J$ is connected to $H$ and not visited. Therefore, a contradiction.
\end{proof}
\begin{figure}[t]
\centering
\includegraphics[width=3in]{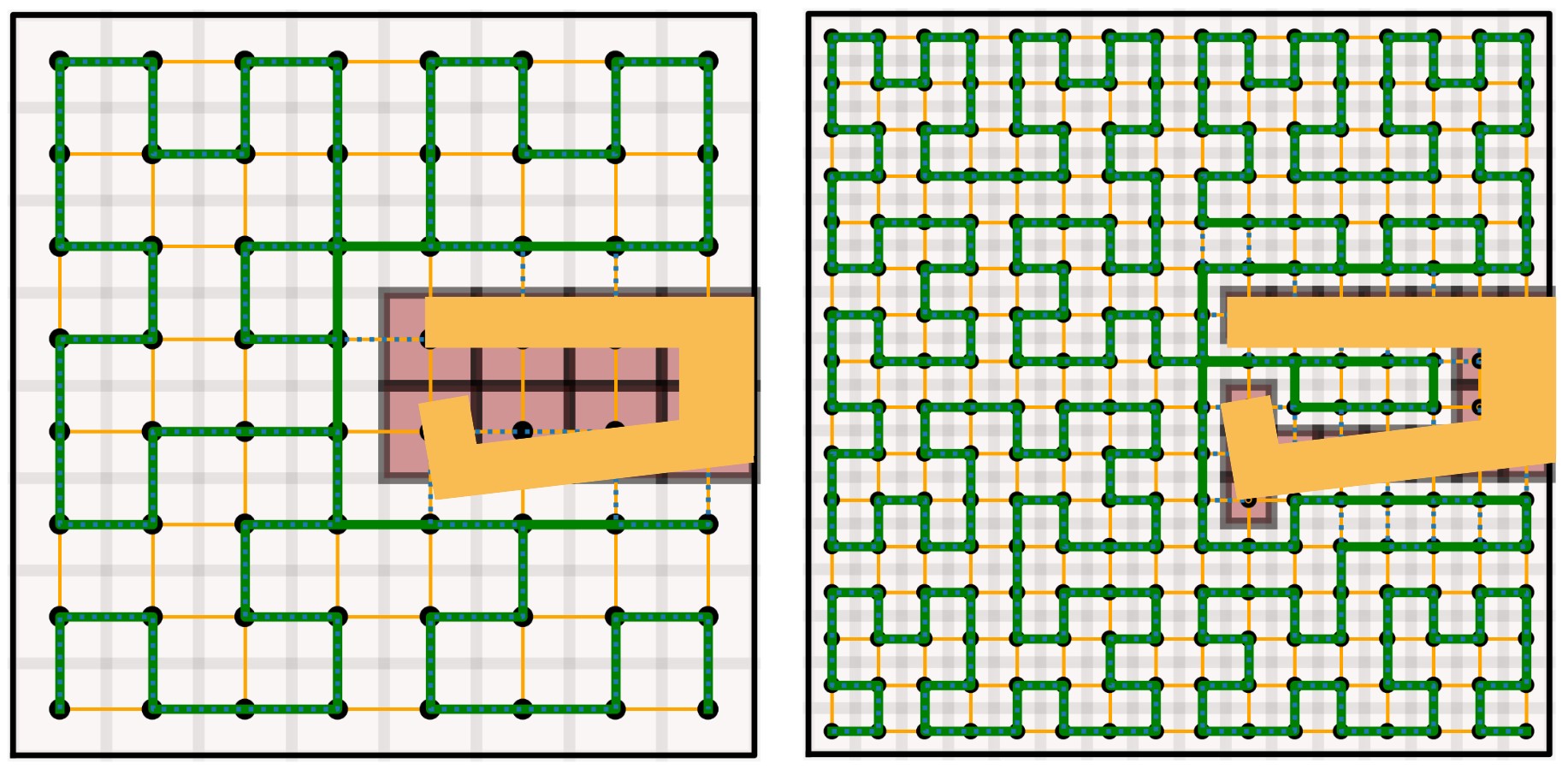}
\caption{Region blocked by tight space reachable through the use of higher iteration (4)}
\label{inv}
\end{figure}
The agent will explore the area spanned by waypoints given the sensing radius is enough to cover the entire cell (Eq. \ref{eq1}). In scenarios involving confined areas, the iteration of SFC might result in blocking access to certain areas that are otherwise reachable. These instances can be detected by comparing the sum of visited and blocked waypoints to the highest numbered (say $c^\prime$) waypoint achieved at the end of the search. 
\par 
If,
\begin{equation*}
    n(V) + n(O) < c^\prime
\end{equation*}
This could mean some waypoints have been blocked due to confined spaces which can be tackled by increasing the iteration of SFC (see figure \ref{inv}), or some waypoints are obstructed on all sides and any change in search will not matter. 

\section{Discussion \& Results}
In this paper, the obstacles bigger or comparable to the sensing radius of the agent are considered dense. While the obstacles smaller than the sensing radius and scattered across the area are considered sparse. Here it is assumed that the agent can modify the sensing radius, as is the case in \cite{sadat1, sadat2} with regards to aerial vehicles. The properties of the altered route recommended by the proposed strategy are outlined below-
\begin{itemize}
    \item This strategy is effective in environments with both dense and sparse obstacles, unlike conventional approaches such as the Lawnmower's path, which struggles in areas with sparse obstacles.
    \item Path adjustments are made within the set of waypoints that have already been visited, making this approach suitable for online use. Moreover, in parts free of obstacles, the strategy follows original SFC pattern, enabling an optimal search in these regions.
    \item Higher iterations of SFCs are preferred for sparse obstacles while lower iterations are preferred for obstacles with larger size. Higher iterations offer a more agile path and lesser occlusion. But, they offer a longer path and may not be desired for dense.
    \item The path generated using the strategy can be adapted for different parts of the region to be explored. Some parts may require more rigorous search than others. In such cases, higher iteration may give the agent more time and focus the search on smaller cells. 
\end{itemize}
The properties mentioned above are illustrated next through different scenarios, as implemented in the code.
\subsection{Results}
The presented strategy was implemented as a Python library. iGraph \cite{igraph} was used for graph operations (version $22.0.4$). Hilbert curve library \cite{hilbert} was used for plotting the Hilbert curve (version $2.0.5$). The implementation and a guide on how to execute the examples can be found at \cite{git_link}.
\begin{itemize}
    \item Dense Obstacles : The strategy was implemented for a situation with normal obstacles. Figure \ref{normal} shows two obstacles blocking the SFC. The modified path is also shown.  
    \par
    Here, while at $0$,  
    \begin{equation*}
    A(V, G) - O = \{1, 2, 3\}
\end{equation*}
the agent searches for obstacle and goes to $1$. Next,
\begin{equation*}
    V = \{0, 1\},   A(V, G) - O = \{2, 3, 13\}
\end{equation*}
agent searches for obstacle and goes to $2$. When no obstacle is found, the agent follows the SFC pattern. This happens till $21$, when obstacle is detected at $22$. Now,
\begin{equation*}
    V = \{0, 1, ..., 21\}
\end{equation*}
\begin{equation*}
   A(V, G) - O = \{23, 29, 30, 31, 32, 53, 54, 57, 58\}
\end{equation*}
So, the agent tries to reach $23$ but encounters obstacle while at $20$. Next, it visits $29$ from $20$ using shortest path. Now,
\begin{equation*}
    V = \{0, 1, ..., 21, 29\}
\end{equation*}
\begin{equation*}
   A(V, G) - O = \{24, 28, 30, 31, 32, 53, 54, 57, 58\}
\end{equation*}
The agent decides to go $24$ but detects an obstacle while at $29$. After which the agent visits $28$,
\begin{equation*}
    V = \{0, 1, ..., 21, 29, 28\}
\end{equation*}
\begin{equation*}
   A(V, G) - O = \{27, 30, 31, 32, 53, 54, 57, 58\}
\end{equation*}
Hence, the agent goes to $27$ and $26$ is added to $A(V, G) - O$, which is reached finally. Now,
\begin{equation*}
    V = \{0, 1, ..., 21, 29, 28, 27, 26\}
\end{equation*}
\begin{equation*}
   A(V, G) - O = \{25, 30, 31, 32, 53, 54, 57, 58\}
\end{equation*}
Next, The agent decides to go $25$ and detects obstacle while at $26$. Finally, the agent goes to $30$, $31$ and so on following the SFC until next obstacle is detected. The search is finally terminated at waypoint $58$.
\begin{figure*}[t]
\centering
\includegraphics[width=2.6in]{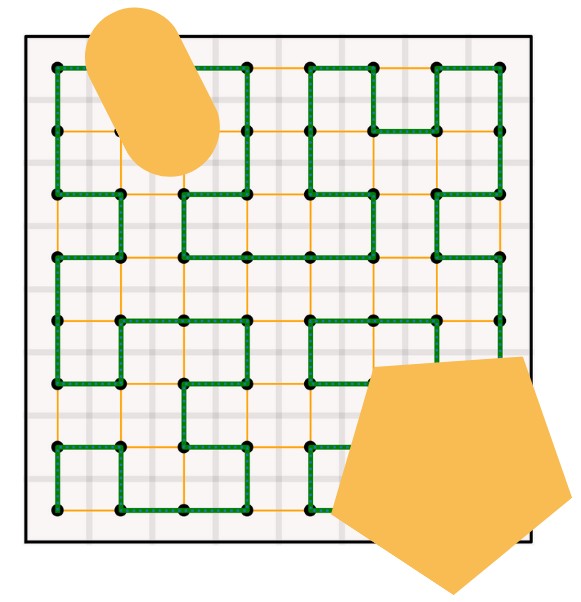}  \raisebox{0.25in}{\includegraphics[width=2.28in]{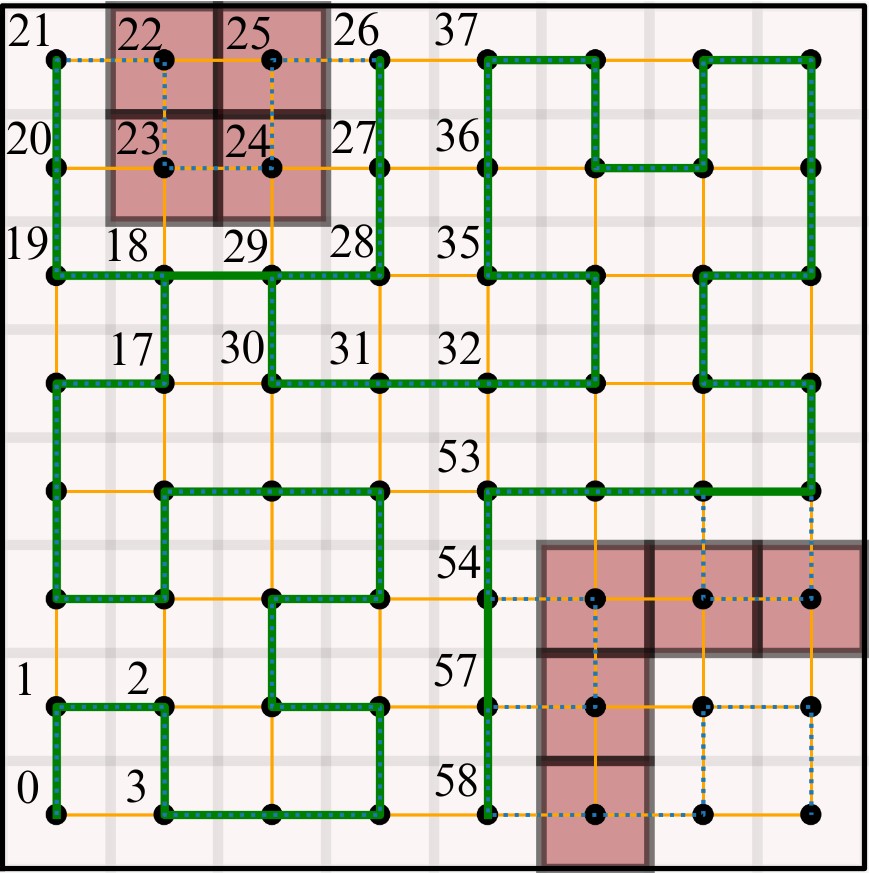}} 
\caption{Hilbert curve with normal sized obstacles; Modified path, Brown represent waypoints with detected obstacle}
\label{normal}
\end{figure*}
    \item Sparse Obstacles : To test the strategy for Sparse obstacles, Iteration $5$ Hilbert curve was blocked by obstacles at unknown random locations. The strategy was able to generate alternate path while following SFC pattern in areas without obstacles as seen in figure \ref{spa}. In scenarios with higher percentage of blocked obstacles resulted in smaller region connected to the starting waypoint. 
\begin{figure*}[t]
\centering
\includegraphics[width=5in]{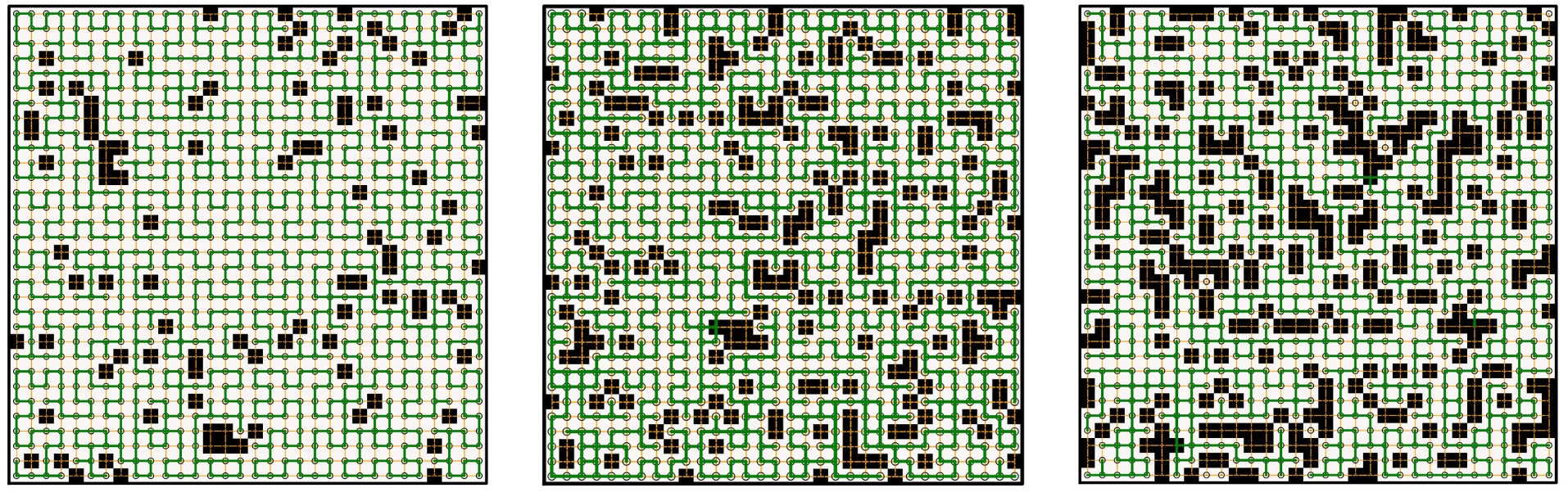}
\caption{Obstacle evasion scenarios with sparse obstacles (shown in black); 10\%, 20\% and 30\% of total waypoints blocked}
\label{spa}
\end{figure*}
    \item Non-Uniform Coverage : In certain cases the user may want to search some parts more rigorously. This is demonstrated by using the proposed strategy in a square region with quadrants spanned by Hilbert curve with different iterations (see figure \ref{non-uni}). Higher iteration (4) is used in a top-left quadrant while lower iteration (1 and 2) in lower quadrants. The proposed strategy suggests exhaustive detour for each of the quadrant when each of them is traversed sequentially.
    \par
    Sometimes, the agent is unable to get to the last waypoint of SFC. This is seen in the right-hand lower quadrant of fig \ref{non-uni}. The agent ends his journey at point $A$. Hence, it is uncertain whether a direct path to the first waypoint of the subsequent SFC is available, as it could be obstructed. Here, agent can be moved to a waypoint ($B$) on the shared edge closest to $A$. Eventually, moving to the closest waypoint on the next SFC and start using the proposed strategy. The approach will guarantee the coverage of waypoints numbered lower than connected to $C$ are visited.

\begin{figure*}
\centering
\includegraphics[width=5.4in]{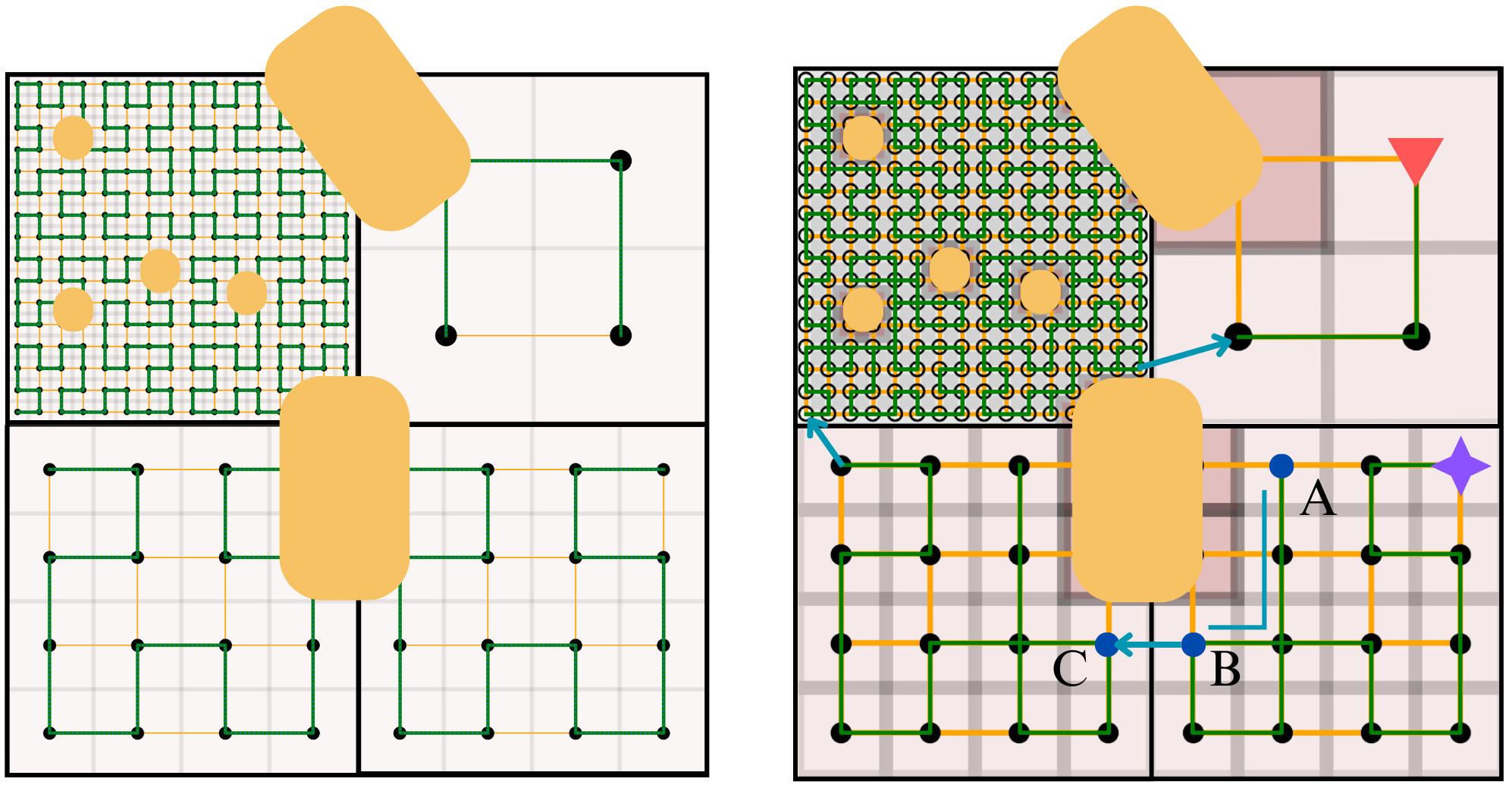}
\caption{Purple star represents starting position, inverted triangle represents terminal position, route between the quadrants is represented by sky-blue arrows}
\label{non-uni}
\end{figure*}
\end{itemize}

\section{Conclusion \& Future Work}
This paper presented a strategy for online obstacle evasion on a Space-Filling curve. The strategy suggests visiting minimum numbered adjacent waypoint that has not been visited. The search was proven to be exhaustive and the strategy was further elaborated through examples with obstacles of different nature. The strategy does not necessarily guarantee a optimal path, but can act as a baseline for further optimization. In addition to addressing the gaps in the strategy, there are other intriguing directions for future work. An enhanced form of the strategy wherein the agent has the knowledge of state (blocked/not-blocked) of all the immediate up-down-left-right nodes could lead to shorter routes as compared with in situ detection as used here. Extension to 3D space and environment with dynamic obstacles could be also be exciting future directions.

\end{document}